\newtheorem{theorem}{Theorem}
\newtheorem{lemma}{Lemma}
\title{E2E-FS: An End-to-End Feature Selection Method for Neural Networks}
\author{
  Brais Cancela \\
  CITIC Research Center\\
  Universidade da Coru\~na\\
  A Coru\~na, Spain, 15008 \\
  \texttt{brais.cancela@udc.es} \\
  \And
  Ver\'onica Bol\'on-Canedo\\
  CITIC Research Center\\
  Universidade da Coru\~na\\
  A Coru\~na, Spain, 15008 \\
  \texttt{veronica.bolon@udc.es} \\
  \And
  Amparo Alonso-Betanzos\\
  CITIC Research Center\\
  Universidade da Coru\~na\\
  A Coru\~na, Spain, 15008 \\
  \texttt{ciamparo@udc.es} \\
}
\begin{document}
\maketitle

\begin{abstract}
    Classic embedded feature selection algorithms are often divided in two large groups: tree-based algorithms and lasso variants. Both approaches are focused in different aspects: while the tree-based algorithms provide a clear explanation about which variables are being used to trigger a certain output, lasso-like approaches sacrifice a detailed explanation in favor of increasing its accuracy. In this paper, we present a novel embedded feature selection algorithm, called End-to-End Feature Selection (E2E-FS), that aims to provide both accuracy and explainability in a clever way. Despite having non-convex regularization terms, our algorithm, similar to the lasso approach, is solved with gradient descent techniques, introducing some restrictions that force the model to specifically select a maximum number of features that are going to be used subsequently by the classifier. Although these are hard restrictions, the experimental results obtained show that this algorithm can be used with any learning model that is trained using a gradient descent algorithm.
\end{abstract}

\keywords{Feature Selection, Big Data}

\section{Introduction}
\label{sec:introduction}

High dimensional problems are very common nowadays and pose an important challenge for Machine Learning researchers. Dealing with thousands or even millions of features is not practical, particularly because some of them are redundant or not informative.  It is therefore important to correctly identify the relevant features for a given task, being this process known as \textit{feature selection} (FS). Reducing the dimensionality of a problem has several acknowledged advantages such as improving interpretability (and therefore explainability), reducing execution times and, in some cases, improving learning performance \cite{guyon2008feature}.

Feature selection methods can be grouped into classifier-dependent approaches (wrappers and embedded methods) and classifier-independent (filters). On the one hand, filters use independent metrics (such as mutual information, correlation, or statistics) to decide which features are more relevant with respect to the predictive class. In this way, the selected features are generic, and the process to extract them is usually not computationally expensive. Examples of filter approaches are Mutual Information (MI) \cite{ross2014mutual}, ReliefF \cite{kononenko1997overcoming} or the Infinite Feature Selection variants, InfFS \cite{roffo2015infinite} and ILFS \cite{roffo2017infinite}.

On the other hand, wrappers and embedded procedures use the performance of a learning method (e.g. a classifier) to determine the subset of relevant features. Wrappers search through the space of features using the accuracy of a particular classifier to determine the usefulness of a candidate feature subset. This approach tends to be computationally expensive, and the selected features are specific for the classifier used to obtain them. Embedded methods are halfway wrappers and filters (in terms of their computational expense) and determine the relevant features through the training process of a classifier. Embedded methods are less computationally expensive and less prone to overfitting than wrappers, and have the additional advantage that both feature selection and  classification training can be made at the same time. 

One of the most well-known embedded methods is Recursive Feature Elimination for Support Vector Machine (SVM-RFE) \cite{guyon2002gene}, which computes the importance of the features in the process of training a SVM. More recently, the Saliency-based Feature Selection (SFS) method \cite{cancela2020scalable} aims to use the Saliency technique \cite{simonyan2013deep} to infer the most relevant features. However, both methods have a high computational cost, as they require to train a classifier several times to obtain a good result. Lasso \cite{tibshirani1996regression} is also very popular, as it is based on the extracted subset that included shape and density features. However, contrary to the filter methods, there is no control about the number of features that are finally chosen.

In this paper we aim to merge the best characteristics of both filter methods and the Lasso approach into one unique algorithm, that we called End-to-End Feature Selection (E2E-FS). The advantages of this approach are:
\begin{enumerate}
    \item It can be used with any model that is trained by using gradient descent techniques. Thus, it is not restricted to classification problems.
    \item Similar to Lasso, the feature selection is performed at the same time the model is trained, considerably reducing the computational cost. We can avoid the computation of multiple models, like in SVM-RFE.
    \item Similar to ranker filter methods, and contrary to Lasso, we can specify the maximum number of features that are finally selected while still solving it with gradient descent techniques.
    \item It is very efficient in terms of both computational time and memory, as only a vector of the size of the number of initial features is required.
\end{enumerate}

To our knowledge, this is the first embedded method that can specify an exact number of features and train a learning model in just one step, without using a recursive approach. Only constraints and regularization parameters are used to obtain the final model.

The rest of the paper is organized as follows: first, we will describe the intuition behind our idea; second, we will provide an implementation of our E2E-FS algorithm; next we will report experimental results for a wide range of public datasets, and finally, we will offer some conclusions and future work.

\section{E2E-FS Algorithm}
\label{sec:E2E-FS_algorithm}

For our model, we will first present the idea behind our algorithm before providing an approach to implement it. Let $\mathbf{X} \in \mathcal{R}^{N \times F}$ be our input data, where $N$ are the number of instances and $F$ are the total number of different features. Let $\mathbf{Y} \in \mathcal{R}^{N \times C}$ be the expected output, where $C$ is the number of classes. For the sake of simplicity, only classification problems will be taken into account, although our approach can also be used in any other problem (for instance, regression) that can be solved by using gradient descent techniques, without making any other modification. Thus, let $\mathbf{\tilde{Y}} = f(\mathbf{X}; \mathbf{\Theta}) \in \mathcal{R}^{N \times C}$ be our classification model, where $\mathbf{\Theta}$ are the classifier parameters. Our aim is to solve a minimization problem by forcing the classifier to only select a maximum number of features, denoted by $M$. Formally speaking, our algorithm aims to solve the following minimization problem:
\begin{equation}
    \label{eq:formulation}
    \begin{aligned}
    & \underset{\mathbf{\Theta}, \mathbf{\gamma}}{\text{minimize}}
    & & \mathcal{L}(f(\mathbf{\gamma} \circ \mathbf{X}; \mathbf{\Theta}), \mathbf{Y}) \\
    & \text{subject to}
    & & \mathbf{\gamma} \in \{0, 1\}^F, \\
    & 
    & & \|\mathbf{\gamma}| \|_1 \leq M.
    \end{aligned}
\end{equation}
where $\mathcal{L}$ is the loss function, $M$ is the maximum number of features we wish to use; and $\mathbf{\gamma}$ is the mask layer. The idea is simple: we aim to train a classification problem while introducing a binary mask layer, which will be in charge of selecting the most relevant $M$ features (or less). Initially, this is not a problem that can be solved by using gradient descent techniques, as the binary mask is not differentiable. However, some approximations can be made to fulfill the requirements.

\section{E2E-FS Implementation}
\label{sec:E2E-FS_implementation}

The first decision is to select the shape of $\mathbf{\gamma}$. The easiest approach is to select $\mathbf{\gamma} \in \mathcal{R}^{M \times F}$, and then forcing $\| \mathbf{\gamma} \|_{\infty} = 1$ and $\| \mathbf{\gamma} \|_1 \leq 1$, that is, all zeros but one in each row, and one non-zero per column, at most. This can be solved by using gradient descent techniques and the $l_{1-2}$ regularization \cite{yin2015minimization} in the same way described in \cite{lyu2019autoshufflenet} for learning permutation matrices. However, this approach will require a huge amount of memory space in big data environments. For instance, selecting $10000$ features from an initial dataset with more than $100000$ variables will require, when using 32 floating-point precision, near 4GB only to store the $\mathbf{\gamma}$ matrix.

For that reason, we decided to develop a different solution that only requires a vector of size $F$, that is, $\mathbf{\gamma} \in \mathcal{R}^{F}$. This solution will be solved by only introducing regularization parameters to the loss function.

\subsection{E2E-FS using soft regularization techniques}

We fulfill the restrictions exposed in Eq. \ref{eq:formulation} by only using regularization terms. Thus, this solution, called \textit{E2E-FS}, transforms the initial problem into
\begin{equation}
    \label{eq:E2E-FS_soft}
    \begin{aligned}
    & \underset{\mathbf{\Theta}, \tilde{\mathbf{\gamma}}}{\text{minimize}}
    & & \overbrace{\mathcal{L}(f(\tilde{\mathbf{\gamma}} \circ \mathbf{X}; \mathbf{\Theta}), \mathbf{Y})}^{\mathcal{L}_f} \\
    & \text{subject to}
    & & \tilde{\mathbf{\gamma}} \in [0, 1]^F, \\
    & 
    & & \overbrace{\underbrace{\| \tilde{\mathbf{\gamma}} \|_1 - \| \tilde{\mathbf{\gamma}} \|^2_2 }_{\mathcal{L}_{1-2}} ~+~
     \underbrace{(1 + \mu)~\max(0, | M - \| \tilde{\mathbf{\gamma}} \|_1 |)}_{\mathcal{L}_{M}}}^{\mathcal{L}_{\tilde{\mathbf{\gamma}}}} = 0,\\
    \end{aligned}
\end{equation}
where $\mu > 0$ is a hyper-parameter (set by default to 1).

The intuition behind the idea is simple: $\mathcal{L}_{1-2}$ is used to force $\tilde{\mathbf{\gamma}}$ values to be binary (either $0$ or $1$), while $\mathcal{L}_{M}$ ensures the summation of all values in $\tilde{\mathbf{\gamma}}$ are near to the desired maximum number of features $M$. 
By default, we initialized $\tilde{\mathbf{\gamma}} = 1$. 

\vspace{0.3cm}\noindent\textbf{Complexity}:\quad The complexity of this approach remains at $\mathcal{O}(\mathcal{O}_f)$, as this approach only introduces regularization terms over the model $f$.

\vspace{0.3cm}\noindent\textbf{Implementation details}:\quad As the restrictions in this approach are more relaxed, we cannot guarantee that $M$ features are selected after a fixed number of epochs. Instead, we need to check the loss function to confirm it. We can assure that
\begin{equation}
    \mathcal{L}_{\tilde{\mathbf{\gamma}}} = \mathcal{L}_{1-2} + \mathcal{L}_{M} = 0 ~~ \Rightarrow ~~ nnz(\tilde{\mathbf{\gamma}}) = M
\end{equation}

Even if we are changing Eq. \ref{eq:formulation} into a problem with regularization terms, we still need to satisfy the constraints to obtain the desired result. Thus, we need to force $\mathcal{L}_{1-2}$ and  $\mathcal{L}_{M}$ to be zero. To do so, we define the gradient w.r.t. $\tilde{\mathbf{\gamma}}$ as
\begin{equation}
    \label{eq:soft_gradient}
    \dfrac{\partial \mathcal{L}_{f, \tilde{\mathbf{\gamma}}}}{\partial \tilde{\mathbf{\gamma}}} = (1 - \alpha) \dfrac{\partial \mathcal{L}_{f}}{\partial \tilde{\mathbf{\gamma}}} + \alpha \dfrac{\partial \mathcal{L}_{\tilde{\mathbf{\gamma}}}}{\partial \tilde{\mathbf{\gamma}}},
\end{equation}
where $\alpha \in [0, 1]$ is a hyper-parameter. This parameter will control the gradient focus between the classification loss and the regularization terms. Note that the same $\alpha$ hyper-parameter controls both $\mathcal{L}_{1-2}$ and $\mathcal{L}_{M}$ terms. In the proof of convergence we will show why these two terms should always be treated as a whole.

\begin{figure*}
	\centering
	\begin{minipage}[]{0.45\linewidth}
	    \centering
 		\includegraphics[width=0.99\textwidth]{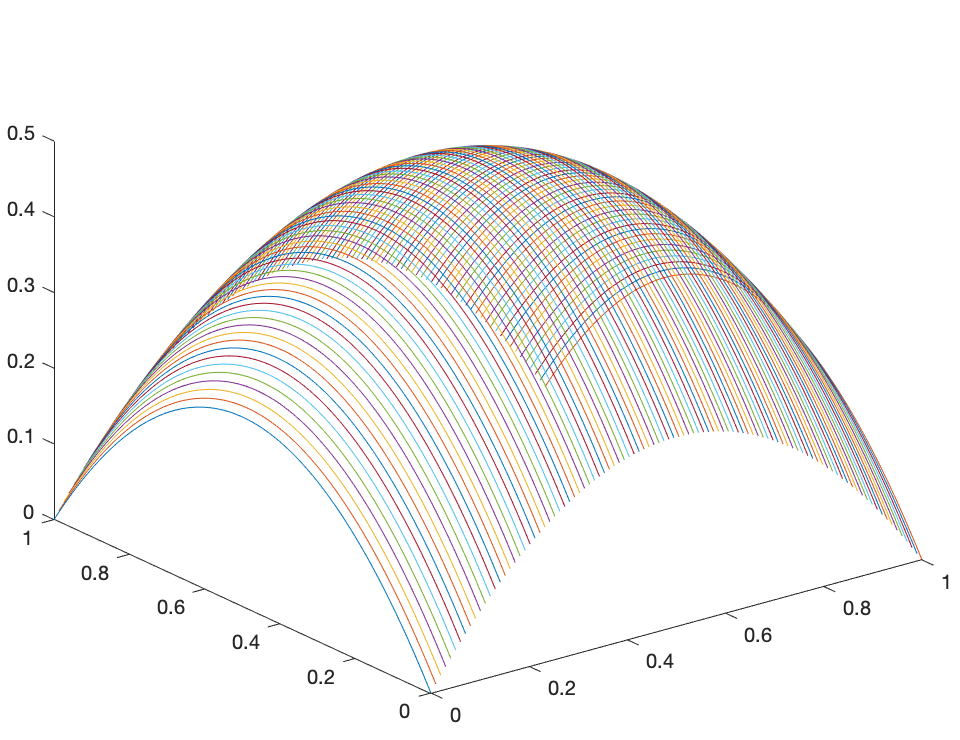}\\
 		(a) $\mathcal{L}_{1-2}$
	\end{minipage}
	\begin{minipage}[]{0.45\linewidth}
	    \centering
 		\includegraphics[width=0.99\textwidth]{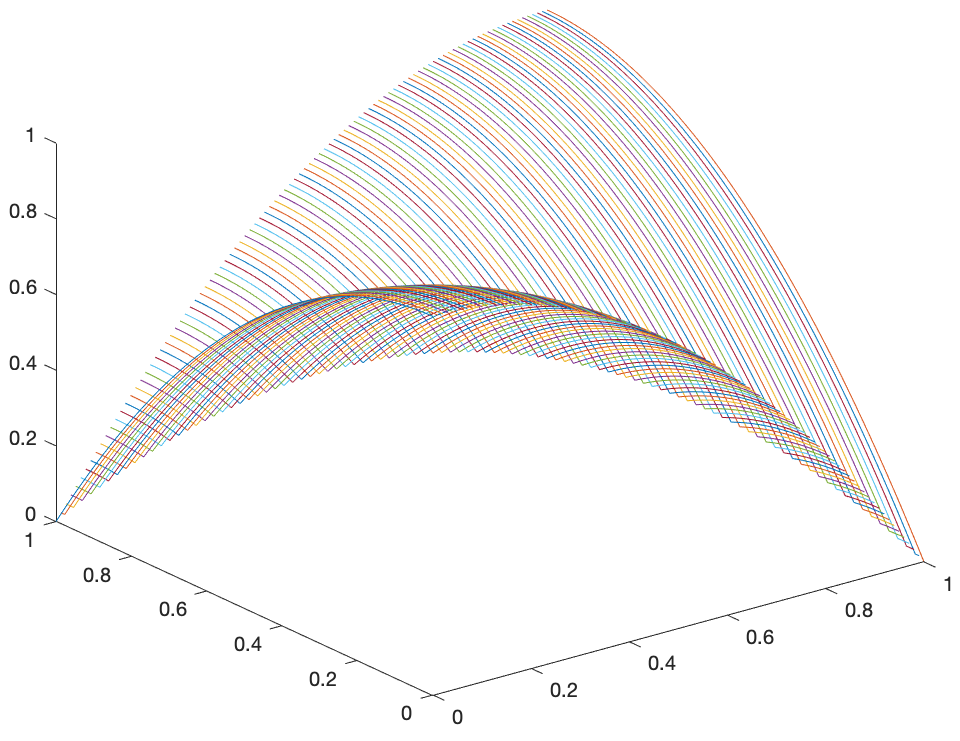}\\
 		(b) $\mathcal{L}_{1-2} + \mathcal{L}_{M}$
	\end{minipage}
\caption{
 $\mathcal{L}_{1-2}$ loss for two variables and $M = 1$. Although is a concave function (a) with minimum values in undesirable states (all zeros and all ones), it is possible to remove them by adding $\mathcal{L}_{M}$ to the loss equation.
}
\label{fig:non_convex}
\end{figure*}

When applying gradient descent to minimize $\mathcal{L}_{f, \mathbf{\gamma}}$, this approach presents a couple of issues. Below we will mention them, providing also some alternatives to solve them.
\begin{enumerate}
    \item \textit{$\mathcal{L}_{1-2}$ is non-convex.} As depicted in Fig. \ref{fig:non_convex}-(a), $\mathcal{L}_{1-2}$ is a concave function that, when restricted to $\mathbf{\gamma} \in [0, 1]$, has its minimum values over the corners (that is, when $\mathbf{\gamma} = \{0, 1\}$. This is a problem because this function, when reaches its minimum, can led to some extreme configurations, like selecting all features or discarding all of them. However, when the $\mathcal{L}_{M}$ loss is also introduced (see Fig. \ref{fig:non_convex}-(b)), we can guarantee that the minimum is reached it and only if the number of selected features is equal to $M$.
    
    Besides that, we still have a concave function. This led to the gradient accelerating when going near to the minimum. But this is not a huge problem, as the purpose of this algorithm is to force every value on the vector $\mathbf{\gamma}$ to reach one of its extreme values (0 or 1). In the proof of convergence, we will also see how our combination $\mathcal{L}_{1-2} + \mathcal{L}_{M}$ can prevent our algorithm to discard more features than $M$, but also increase the separation between relevant and irrelevant features.
    
    \item \textit{The derivatives of the classification problem and the regularization terms can be of different scale}. Initially, we would like the gradient of gamma to be guided by the classification loss $\mathcal{L}_{f}$, and gradually changing the $\alpha$ parameter to $1$, forcing the training to remove the least important features. In order to ensure both values have the same scale we change Eq. \ref{eq:soft_gradient} to
    \begin{equation}
        \label{eq:soft_gradient_2}
        \dfrac{\partial \mathcal{L}_{f, \tilde{\mathbf{\gamma}}}}{\partial \tilde{\mathbf{\gamma}}} = \beta \left( (1 - \alpha) z \left(\dfrac{\partial \mathcal{L}_{f}}{\partial \tilde{\mathbf{\gamma}}}\right) + \alpha z\left(\dfrac{\partial \mathcal{L}_{\tilde{\mathbf{\gamma}}}}{\partial \tilde{\mathbf{\gamma}}} \right)\right),
    \end{equation}
    where 
    \begin{align}
	    z(x) = \frac{x}{\|x\|},
    \end{align}
    is the normalization equation. We set a value in $\beta$ to prevent a perfect classification to vanish the gradient. By default, $\beta = 1$.
    
    Again, it is difficult to establish a good $\alpha$ value, as a low value can led the restrictions not to be fulfilled, whereas a high value can create the binary matrix $\mathbf{\tilde{\gamma}}$ without taking the information of the problem (encoded in $\mathcal{L}_{f}$) into account. To prevent this issue, we decided to substitute $\alpha$ by a moving factor defined as
    \begin{equation}
        \alpha^t_T = \min(1, t/T),
    \end{equation}
    where $t$ is the training iteration and $T$ is an hyper-parameters that controls how smooth do we want to introduce the $\mathcal{L}_{\tilde{\mathbf{\gamma}}}$ loss in the $\mathbf{\tilde{\gamma}}$ update. The moving $\alpha^t_T$ allows to gradually change the gradient importance from the classification to the regularization loss. 
    
    \item \textit{It is difficult to remove new features as $nnz(\mathbf{\gamma})$ approaches to $M$}. This problem is related to the $\mathcal{L}_{1-2}$ behavior, as the gradient descent of values higher than $0.5$ goes to the direction of $1$ rather than to $0$. In a similar way as exposed earlier, we prevent this by introducing a variant of our E2E-FS algotithm, called \emph{E2E-FS-Soft}, that substitutes the parameter $M$ by a moving value
    \begin{align}
	    M_\rho & = 
        \begin{cases}
            ~ (1 - \rho) M & \text{if }  nnz(\mathbf{\gamma}) > M\\
            ~ nnz(\mathbf{\gamma}) & \text{otherwise}
        \end{cases}
    \end{align}
    By default, we set $\rho = 0.75$. The $M_\rho$ parameter allows us to remove features faster. Note that this moving factor can cause the algorithm to remove more features than expected.
\end{enumerate}

\vspace{0.3cm} \noindent\textbf{Proof of Convergence}:\quad If the first place, we want to explain why the configuration $\mathcal{L}_{1-2} + \mathcal{L}_{M}$ is chosen, and how it can be used to successfully obtain the desired binary vector $\tilde{\mathbf{\gamma}}$.

\begin{lemma}
    \label{lemma:1_1}
    Given any $M > 0, \quad | \tilde{\mathbf{\gamma}} |_1 > M \quad \Rightarrow \quad \frac{\partial \mathcal{L}_{1-2} + \mathcal{L}_{M}}{\partial \tilde{\mathbf{\gamma}}} > 0.$
\end{lemma}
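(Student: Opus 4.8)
The plan is to prove the inequality componentwise, reading ``$\frac{\partial (\mathcal{L}_{1-2}+\mathcal{L}_M)}{\partial \tilde{\mathbf{\gamma}}} > 0$'' as the assertion that every partial derivative $\partial(\mathcal{L}_{1-2}+\mathcal{L}_M)/\partial\tilde{\gamma}_i$ is strictly positive. The key simplification is that on the feasible box $\tilde{\mathbf{\gamma}}\in[0,1]^F$ every coordinate is nonnegative, so $\|\tilde{\mathbf{\gamma}}\|_1=\sum_i\tilde{\gamma}_i$ is smooth with $\partial\|\tilde{\mathbf{\gamma}}\|_1/\partial\tilde{\gamma}_i = 1$. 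I would first record this fact and then differentiate each of the two penalty terms separately.

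For the $\ell_{1-2}$ term I would write $\mathcal{L}_{1-2}=\sum_i\tilde{\gamma}_i-\sum_i\tilde{\gamma}_i^2$ and differentiate to obtain $\partial\mathcal{L}_{1-2}/\partial\tilde{\gamma}_i = 1-2\tilde{\gamma}_i$. For $\mathcal{L}_M$ I would exploit the hypothesis $\|\tilde{\mathbf{\gamma}}\|_1 > M$: in this strict regime both the absolute value and the outer $\max$ reduce to $\|\tilde{\mathbf{\gamma}}\|_1 - M$, so $\mathcal{L}_M=(1+\mu)(\|\tilde{\mathbf{\gamma}}\|_1-M)$ is affine and smooth, giving $\partial\mathcal{L}_M/\partial\tilde{\gamma}_i = 1+\mu$. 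Adding the two contributions yields $\partial(\mathcal{L}_{1-2}+\mathcal{L}_M)/\partial\tilde{\gamma}_i = 2+\mu-2\tilde{\gamma}_i$, and since $\tilde{\gamma}_i\le 1$ on the box this is bounded below by $\mu>0$, which is the claim (and in fact supplies a uniform positive lower bound, not merely positivity).

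The only delicate point, and the reason the text insists that $\mathcal{L}_{1-2}$ and $\mathcal{L}_M$ be grouped under a single coefficient, is the interplay of the two terms at the scale of $\mu$. Taken in isolation, $\partial\mathcal{L}_{1-2}/\partial\tilde{\gamma}_i = 1-2\tilde{\gamma}_i$ turns negative as soon as $\tilde{\gamma}_i > 1/2$, so the $\ell_{1-2}$ penalty alone would push large coordinates toward $1$ rather than toward $0$; it is precisely the $+(1+\mu)$ coming from $\mathcal{L}_M$ that dominates this and restores a strictly positive derivative across the whole box. I expect the main obstacle to be expository rather than technical: I must handle the nonsmoothness of $|\cdot|$ and $\max(0,\cdot)$ cleanly by restricting to the open region $\|\tilde{\mathbf{\gamma}}\|_1 > M$ where both are differentiable, so that no subgradient machinery is needed, and I should state explicitly that the conclusion is a componentwise statement whose operational meaning is that a gradient-descent step decreases every $\tilde{\gamma}_i$, thereby driving $\|\tilde{\mathbf{\gamma}}\|_1$ back toward $M$.
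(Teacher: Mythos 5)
Your proposal is correct and follows essentially the same route as the paper: restrict to the regime $\|\tilde{\mathbf{\gamma}}\|_1 > M$ so that $\mathcal{L}_M$ becomes the affine function $(1+\mu)(\|\tilde{\mathbf{\gamma}}\|_1 - M)$, differentiate componentwise to get $\partial(\mathcal{L}_{1-2}+\mathcal{L}_M)/\partial\tilde{\gamma}_i = 2(1-\tilde{\gamma}_i)+\mu$, and bound this below by $\mu>0$ using $\tilde{\gamma}_i\in[0,1]$. The only difference is cosmetic (you differentiate the two terms separately rather than combining them first), and your added remarks on handling the nonsmoothness of $|\cdot|$ and $\max(0,\cdot)$ make explicit a point the paper leaves implicit.
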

\begin{proof}
    Knowing that $| \tilde{\mathbf{\gamma}} |_1 > M$, we can rewrite $\mathcal{L}_{M} = (1 + \mu) ~ (| \tilde{\mathbf{\gamma}} |_1 - M)$. Thus, $\mathcal{L}_{1-2} + \mathcal{L}_{M} = (2 + \mu)~| \tilde{\mathbf{\gamma}} |_1 - \| \tilde{\mathbf{\gamma}} \|^2_2 - (1 + \mu)~M$. Its gradient is defined as $$\frac{\partial \mathcal{L}_{1-2} + \mathcal{L}_{M}}{\partial \tilde{\gamma_i}} = 2~(1 - \tilde{\gamma_i}) + \mu, \quad \forall i \in (1 \ldots F).$$ As, by definition, $\tilde{\gamma_i} \in [0, 1]$, we have that $$\min\left( \frac{\partial \mathcal{L}_{1-2} + \mathcal{L}_{M}}{\partial \tilde{\gamma_i}} \right) = \mu, \quad \forall i \in (1 \ldots F).$$ Having $\mu > 0$, the proof is complete.
\end{proof}

\begin{lemma}
    \label{lemma:1_2}
    Given any $M > 0, \quad | \tilde{\mathbf{\gamma}} |_1 < M \quad \Rightarrow \quad \dfrac{\partial \mathcal{L}_{1-2} + \mathcal{L}_{M}}{\partial \tilde{\mathbf{\gamma}}} z 0.$
\end{lemma}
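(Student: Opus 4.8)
The plan is to mirror the proof of Lemma~\ref{lemma:1_1}, since the statement is its exact counterpart on the opposite side of the threshold $M$ (I read the relation ``$\cdots\, z\, 0$'' as strict negativity, i.e.\ $<0$). The key observation is that the hypothesis $|\tilde{\mathbf{\gamma}}|_1 < M$ lets me resolve the non-smooth pieces of $\mathcal{L}_M$: on this region $M - |\tilde{\mathbf{\gamma}}|_1 > 0$, so the outer $\max$ is inactive and the absolute value opens with the opposite sign to Lemma~\ref{lemma:1_1}, giving $\mathcal{L}_M = (1+\mu)(M - |\tilde{\mathbf{\gamma}}|_1)$.

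First I would substitute this into the combined penalty and use $|\tilde{\mathbf{\gamma}}|_1 = \sum_i \tilde{\gamma_i}$ (valid since every $\tilde{\gamma_i} \in [0,1]$ is nonnegative) to collapse it to $\mathcal{L}_{1-2} + \mathcal{L}_M = (1+\mu)M - \mu\,|\tilde{\mathbf{\gamma}}|_1 - \| \tilde{\mathbf{\gamma}} \|^2_2$. Differentiating coordinate-wise, the constant drops out, the $\ell_1$ term contributes $-\mu$ and the squared $\ell_2$ term contributes $-2\tilde{\gamma_i}$, so that
\[
    \frac{\partial \mathcal{L}_{1-2} + \mathcal{L}_{M}}{\partial \tilde{\gamma_i}} = -\mu - 2\,\tilde{\gamma_i}, \quad \forall i \in (1 \ldots F).
\]
Finally, since $\tilde{\gamma_i} \in [0,1]$, this derivative is maximized at $\tilde{\gamma_i} = 0$, yielding $\max_i \partial(\mathcal{L}_{1-2}+\mathcal{L}_M)/\partial\tilde{\gamma_i} = -\mu$; as $\mu > 0$, the gradient is strictly negative in every coordinate, completing the argument. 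Intuitively this is the complement of Lemma~\ref{lemma:1_1}: when fewer than $M$ units of mass are allocated, every coordinate feels a downhill push to grow, driving $|\tilde{\mathbf{\gamma}}|_1$ back up toward $M$.

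I do not expect a genuine obstacle here, as the computation is symmetric to the previous lemma; the only care required is bookkeeping around non-smoothness. Specifically, both $|M - |\tilde{\mathbf{\gamma}}|_1|$ and the $\ell_1$ norm have kinks (at $|\tilde{\mathbf{\gamma}}|_1 = M$ and at $\tilde{\gamma_i}=0$, respectively), but the strict inequality $|\tilde{\mathbf{\gamma}}|_1 < M$ keeps us away from the first kink, and at the box boundary $\tilde{\gamma_i}=0$ it suffices to read the gradient as the right-hand derivative, exactly as in Lemma~\ref{lemma:1_1}. Together with Lemma~\ref{lemma:1_1}, this pins the only stationary configuration of $\mathcal{L}_{\tilde{\mathbf{\gamma}}}$ to $|\tilde{\mathbf{\gamma}}|_1 = M$, which is the property the convergence argument ultimately needs.
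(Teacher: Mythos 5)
Your proof is correct and follows essentially the same route as the paper's: resolve $\mathcal{L}_M$ to $(1+\mu)(M - \|\tilde{\mathbf{\gamma}}\|_1)$ under the hypothesis, compute the coordinate-wise gradient $-2\tilde{\gamma_i}-\mu$, and bound its maximum over $[0,1]$ by $-\mu<0$. Your reading of the garbled relation as strict negativity matches the intended symmetric counterpart of Lemma~\ref{lemma:1_1}, and your constant term $(1+\mu)M$ even corrects a harmless typo in the paper's expansion.
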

\begin{proof}
    Similar to the previous lemma, we have that $\mathcal{L}_{M} = (1 + \mu) ~ (M - | \tilde{\mathbf{\gamma}} |_1)$ and $\mathcal{L}_{1-2} + \mathcal{L}_{M} = M - \| \tilde{\mathbf{\gamma}} \|^2_2 - \mu~\| \tilde{\mathbf{\gamma}} \|_1$. Its gradient is defined as $$\frac{\partial \mathcal{L}_{1-2} + \mathcal{L}_{M}}{\partial \tilde{\gamma_i}} = -2~ \tilde{\gamma_i} - \mu, \quad \forall i \in (1 \ldots F).$$ Having $\tilde{\gamma_i} \in [0, 1]$, we have that $$\max\left( \frac{\partial \mathcal{L}_{1-2} + \mathcal{L}_{M}}{\partial \tilde{\gamma_i}} \right) = -\mu, \quad \forall i \in (1 \ldots F).$$ Having $\mu > 0$, the proof is complete.
\end{proof}

\begin{theorem}
    The $\mathcal{L}_{1-2} + \mathcal{L}_{M}$ regularization term ensures that $| \tilde{\mathbf{\gamma}} |_1 \approx M$, preventing the features to fall from the extreme values (0 and 1) while maximizing the separation between them.
\end{theorem}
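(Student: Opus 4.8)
The plan is to read the theorem as the conjunction of two claims and to derive each directly from Lemmas~\ref{lemma:1_1} and~\ref{lemma:1_2} together with the elementary structure of $\mathcal{L}_{1-2}$. First I would establish that $M$ is a stable attractor for the mass $\|\tilde{\mathbf{\gamma}}\|_1$ under the descent dynamics. Recall that a projected-gradient update moves each coordinate opposite to its partial derivative. By Lemma~\ref{lemma:1_1}, whenever $\|\tilde{\mathbf{\gamma}}\|_1 > M$ every partial derivative of $\mathcal{L}_{1-2} + \mathcal{L}_{M}$ is at least $\mu > 0$, so every coordinate decreases and the sum strictly decreases toward $M$; symmetrically, by Lemma~\ref{lemma:1_2}, whenever $\|\tilde{\mathbf{\gamma}}\|_1 < M$ every partial derivative is at most $-\mu < 0$, so every coordinate increases and the sum strictly increases toward $M$. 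Since the drift changes sign exactly at $\|\tilde{\mathbf{\gamma}}\|_1 = M$, the mass is pushed to $M$ from both sides, which is the sense in which $\|\tilde{\mathbf{\gamma}}\|_1 \approx M$.

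Second, I would address the extremality and separation claim by reading off the magnitudes of the coordinate-wise gradients in the two lemmas, not merely their signs. In the regime $\|\tilde{\mathbf{\gamma}}\|_1 > M$ the descent step on coordinate $i$ is proportional to $2(1 - \tilde{\gamma_i}) + \mu$, which is strictly decreasing in $\tilde{\gamma_i}$; hence small coordinates are driven toward $0$ faster than large ones, widening the gap between them. In the regime $\|\tilde{\mathbf{\gamma}}\|_1 < M$ the ascent step is proportional to $2\tilde{\gamma_i} + \mu$, which is strictly increasing in $\tilde{\gamma_i}$, so large coordinates are driven toward $1$ faster than small ones, again widening the gap. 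I would stress that neither term alone produces this behaviour: $\mathcal{L}_{M}$ fixes the target mass at $M$ while the concave $\mathcal{L}_{1-2}$ supplies the $\pm 2\tilde{\gamma_i}$ contribution that differentiates coordinates, which is exactly why the two must be treated jointly.

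I would then close the argument by characterising the fixed points. Because $\mathcal{L}_{1-2} = \sum_i \tilde{\gamma_i}(1 - \tilde{\gamma_i}) \geq 0$ vanishes precisely on $\{0,1\}^F$ and $\mathcal{L}_{M} \geq 0$ vanishes precisely when $\|\tilde{\mathbf{\gamma}}\|_1 = M$, the combined regularizer attains its minimum value $0$ if and only if exactly $M$ coordinates equal $1$ and the rest equal $0$ — the implication already recorded as $\mathcal{L}_{\tilde{\mathbf{\gamma}}} = 0 \Rightarrow nnz(\tilde{\mathbf{\gamma}}) = M$. Combining the attractor property, the widening-gap property, and the box constraint $\tilde{\mathbf{\gamma}} \in [0,1]^F$, which clamps a coordinate once it reaches $0$ or $1$, then yields the full statement.

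The main obstacle I anticipate is making ``$\approx M$'' and ``maximizing the separation'' rigorous, since the two lemmas control only the \emph{sign} and a one-sided \emph{bound} of each partial derivative, not the actual trajectory on this non-convex landscape. A fully rigorous version would have to quantify the drift — for example, show the mass contracts toward $M$ at a rate bounded below by $\mu$ per step modulo the projection — and verify that coordinates pinned at the boundary remain there, rather than invoking only the instantaneous gradient direction. I would therefore present the result as a statement about the descent dynamics induced by the regularizer, noting that the concavity of $\mathcal{L}_{1-2}$ accelerates the motion near the corners and so reinforces, rather than obstructs, convergence to a binary configuration of mass $M$.
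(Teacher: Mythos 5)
Your proposal is correct and follows essentially the same route as the paper's own proof: both derive the attraction of $\|\tilde{\mathbf{\gamma}}\|_1$ toward $M$ from the signs of the gradients in Lemmas~\ref{lemma:1_1} and~\ref{lemma:1_2}, and both obtain the separation claim from the fact that the gradient magnitude $2(1-\tilde{\gamma_i})+\mu$ (resp.\ $2\tilde{\gamma_i}+\mu$) is monotone in $\tilde{\gamma_i}$, so the coordinates nearest the relevant extreme move fastest. Your added fixed-point characterisation duplicates what the paper defers to Lemmas~\ref{lemma:2_1}--\ref{lemma:2_3}, and your closing caveat --- that the lemmas control only instantaneous gradient signs and bounds, not the actual trajectory --- is an accurate assessment of the informality shared by both arguments.
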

\begin{proof}
    By Lemma \ref{lemma:1_1} we can ensure that, if $| \tilde{\mathbf{\gamma}} |_1 > M$, all $\tilde{\mathbf{\gamma}}$ values will be reduced, preventing them to quickly reach the extreme value 1. Furthermore, by the same lemma, we know that the features which reduce their value the most are the ones that are closer to 0, forcing the system to drop the irrelevant features and maximizing their distance with respect to the relevant ones. On the contrary, if $| \tilde{\mathbf{\gamma}} |_1 < M$, lemma \ref{lemma:1_2} guarantees that all features will increase its value. And, again, the features which increase their value the most are the ones with higher $\tilde{\mathbf{\gamma}}$ values, maximizing their distance against the irrelevant ones.
\end{proof}

Below we provide an explanation about when Eq. \ref{eq:E2E-FS_soft} satisfies the restrictions imposed in our original problem.

\begin{lemma}
    \label{lemma:2_1}
    $\mathcal{L}_{1-2} = 0 \quad \Leftrightarrow \quad \tilde{\mathbf{\gamma}} \in \{0, 1\}$.
\end{lemma}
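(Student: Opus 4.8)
The plan is to exploit the domain constraint $\tilde{\mathbf{\gamma}} \in [0,1]^F$ to rewrite $\mathcal{L}_{1-2}$ coordinatewise as a sum of non-negative terms, and then observe that such a sum vanishes exactly when every term does. The key algebraic move is that on $[0,1]$ the $l_1$ norm loses its absolute value, so that $\|\tilde{\mathbf{\gamma}}\|_1 = \sum_i \tilde{\gamma}_i$, which lets me combine the two norms into a single separable expression.

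Concretely, I would first note that since each $\tilde{\gamma}_i \in [0,1]$ we have $|\tilde{\gamma}_i| = \tilde{\gamma}_i$, so that
\begin{equation}
    \mathcal{L}_{1-2} = \|\tilde{\mathbf{\gamma}}\|_1 - \|\tilde{\mathbf{\gamma}}\|_2^2 = \sum_{i=1}^{F} \left( \tilde{\gamma}_i - \tilde{\gamma}_i^2 \right) = \sum_{i=1}^{F} \tilde{\gamma}_i (1 - \tilde{\gamma}_i).
\end{equation}
The factorization is the heart of the argument: for each $i$, the domain restriction gives both $\tilde{\gamma}_i \geq 0$ and $1 - \tilde{\gamma}_i \geq 0$, so every summand $\tilde{\gamma}_i(1-\tilde{\gamma}_i)$ is non-negative and $\mathcal{L}_{1-2} \geq 0$.

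With non-negativity established, the equivalence follows immediately. For the forward direction, a sum of non-negative terms equals zero only if each term is individually zero, so $\mathcal{L}_{1-2} = 0$ forces $\tilde{\gamma}_i(1-\tilde{\gamma}_i) = 0$ for every $i$; since a product of reals vanishes only when one factor does, this means $\tilde{\gamma}_i = 0$ or $\tilde{\gamma}_i = 1$ for each coordinate, i.e. $\tilde{\mathbf{\gamma}} \in \{0,1\}^F$. The converse is a direct substitution: if each $\tilde{\gamma}_i \in \{0,1\}$, then each factorized term is $0$ and the whole sum collapses to zero.

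There is no genuine obstacle here; the lemma is routine once the $[0,1]$ constraint is invoked to drop the absolute value and expose the separable factorization. The only point requiring any care is making the domain restriction explicit, because without it (for instance if some $\tilde{\gamma}_i > 1$) the individual terms could be negative and the sum could vanish through cancellation at non-binary configurations, which would break the forward implication. I would therefore state clearly at the outset that the equivalence is understood within the feasible set $\tilde{\mathbf{\gamma}} \in [0,1]^F$ imposed by Eq.~\ref{eq:E2E-FS_soft}.
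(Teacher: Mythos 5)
Your proof is correct and follows essentially the same route as the paper: both rewrite $\mathcal{L}_{1-2}$ as the separable sum $\sum_{i=1}^F \tilde{\gamma}_i(1-\tilde{\gamma}_i)$ and conclude from the vanishing of that sum. You simply spell out the step the paper labels ``straightforward,'' and in doing so you are slightly more careful than the paper itself, which invokes only $\tilde{\mathbf{\gamma}} > 0$ even though the upper bound $\tilde{\gamma}_i \leq 1$ is equally needed to make each summand non-negative and rule out cancellation.
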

\begin{proof}
    Knowing that $\tilde{\mathbf{\gamma}} > 0$, we have that $\mathcal{L}_{1-2} = 0 ~\rightarrow~ \| \tilde{\mathbf{\gamma}} \|_1 - \| \tilde{\mathbf{\gamma}} \|^2_2 = 0 ~\rightarrow~ \sum_{i=1}^F \tilde{\gamma_i} - \tilde{\gamma_i}^2 = 0 ~\rightarrow~ \sum_{i=1}^F \tilde{\gamma_i}(1 - \tilde{\gamma_i}) = 0$. The rest is straightforward.
\end{proof}

\begin{lemma}
    \label{lemma:2_2}
    Given any $M > 0, \quad \mathcal{L}_{M} = 0 \quad \Leftrightarrow \quad \| \tilde{\mathbf{\gamma}} \|_1 = M$.
\end{lemma}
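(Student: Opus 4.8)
The plan is to prove both directions of the equivalence at once by directly unpacking the definition $\mathcal{L}_{M} = (1+\mu)\max(0, |M - \| \tilde{\mathbf{\gamma}} \|_1|)$ and exploiting the fact that every factor involved is nonnegative. First I would observe that, since $\mu > 0$, the scalar $(1+\mu)$ is strictly positive and therefore can never force the product to zero; it can be divided out, so that $\mathcal{L}_{M} = 0$ holds if and only if $\max(0, |M - \| \tilde{\mathbf{\gamma}} \|_1|) = 0$.

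The key simplification is then to notice that the outer $\max(0, \cdot)$ is in fact redundant. Because $|M - \| \tilde{\mathbf{\gamma}} \|_1| \geq 0$ for every possible value of $\| \tilde{\mathbf{\gamma}} \|_1$, the maximum with $0$ always coincides with the absolute value itself, i.e. $\max(0, |M - \| \tilde{\mathbf{\gamma}} \|_1|) = |M - \| \tilde{\mathbf{\gamma}} \|_1|$. This collapses the condition to $|M - \| \tilde{\mathbf{\gamma}} \|_1| = 0$, and invoking the elementary fact that $|a| = 0 \Leftrightarrow a = 0$ yields $M - \| \tilde{\mathbf{\gamma}} \|_1 = 0$, that is, $\| \tilde{\mathbf{\gamma}} \|_1 = M$. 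Chaining these three equivalences establishes the lemma in both directions simultaneously, mirroring the structure already used for $\mathcal{L}_{1-2}$ in Lemma \ref{lemma:2_1}.

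There is no substantive obstacle here, as the statement is essentially a direct reading of the definition of $\mathcal{L}_{M}$. The only point worth flagging is the inertness of the outer clamp at $0$: one might momentarily worry that the $\max(0,\cdot)$ conceals additional zeros of $\mathcal{L}_M$, but since its argument is already an absolute value the clamp never activates. The genuinely load-bearing facts are thus just the strict positivity of $(1+\mu)$ and the nonnegativity of $|\cdot|$, and the main care required is purely presentational, ensuring the reader sees why each implication in the chain is an equivalence rather than a one-directional inference.
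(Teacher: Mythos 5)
Your proof is correct and follows the same route as the paper, which simply declares the lemma straightforward from the definition of $\mathcal{L}_{M}$; you have merely spelled out the three equivalences (strict positivity of $1+\mu$, redundancy of the outer $\max$ given the inner absolute value, and $|a|=0 \Leftrightarrow a=0$) that the paper leaves implicit.
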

\begin{proof}
    By the definition of $\mathcal{L}_{M}$, it is straightforward.
\end{proof}

\begin{lemma}
    \label{lemma:2_3}
    Given any $M >0$ value, $\mathcal{L}_{1-2} + \mathcal{L}_{M} = 0 ~~ \Leftrightarrow ~~ \| \mathbf{\gamma} \|_1 = M$ and $\mathbf{\gamma} \in \{0, 1\}$.
\end{lemma}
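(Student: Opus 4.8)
The plan is to exploit the fact that, on the feasible domain $\tilde{\mathbf{\gamma}} \in [0,1]^F$, both $\mathcal{L}_{1-2}$ and $\mathcal{L}_{M}$ are individually non-negative, so that their sum vanishes if and only if each term vanishes separately. This decoupling reduces the biconditional to a direct application of Lemmas \ref{lemma:2_1} and \ref{lemma:2_2}.

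First I would establish the non-negativity of each term. For $\mathcal{L}_{1-2}$, I would rewrite it as $\sum_{i=1}^F \tilde{\gamma_i}(1 - \tilde{\gamma_i})$, exactly as in the proof of Lemma \ref{lemma:2_1}; since each $\tilde{\gamma_i} \in [0,1]$, every summand is non-negative, whence $\mathcal{L}_{1-2} \geq 0$. For $\mathcal{L}_{M}$, non-negativity is immediate from the outer $\max(0, \cdot)$ in its definition. Because a finite sum of non-negative quantities equals zero precisely when each quantity is zero, I would conclude that $\mathcal{L}_{1-2} + \mathcal{L}_{M} = 0$ holds if and only if $\mathcal{L}_{1-2} = 0$ and $\mathcal{L}_{M} = 0$ hold simultaneously.

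Next I would simply invoke the two earlier lemmas. By Lemma \ref{lemma:2_1}, $\mathcal{L}_{1-2} = 0 \Leftrightarrow \tilde{\mathbf{\gamma}} \in \{0,1\}$, and by Lemma \ref{lemma:2_2}, $\mathcal{L}_{M} = 0 \Leftrightarrow \|\tilde{\mathbf{\gamma}}\|_1 = M$. Chaining these equivalences with the decoupling established above yields the desired result in both directions: the forward direction follows because a vanishing sum forces both terms to vanish, and the backward direction follows because each individual condition makes its corresponding term zero.

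The main obstacle, insofar as there is one, is recognizing that the entire argument hinges on the box constraint $\tilde{\gamma_i} \in [0,1]$: this is exactly what guarantees $\mathcal{L}_{1-2} \geq 0$ and hence the non-negativity needed to split the sum. Were $\tilde{\mathbf{\gamma}}$ allowed to range freely, $\mathcal{L}_{1-2}$ could become negative and cancel against $\mathcal{L}_{M}$, which would break the equivalence and allow spurious zeros of the combined loss. Everything else is a routine consequence of the preceding two lemmas.
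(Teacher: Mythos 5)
Your proof is correct and follows essentially the same route as the paper: reduce the combined condition to Lemmas \ref{lemma:2_1} and \ref{lemma:2_2}. You are in fact slightly more careful than the paper's own one-line argument, since you explicitly verify that both $\mathcal{L}_{1-2}$ and $\mathcal{L}_{M}$ are non-negative on $[0,1]^F$ (the step that licenses splitting the vanishing sum into two separately vanishing terms), which the paper leaves implicit.
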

\begin{proof}
    By Lemma \ref{lemma:2_1}, we know all $\mathbf{\gamma}$ values are integer. Thus, its 1-norm is also an integer. By Lemma \ref{lemma:2_2}, we know that $\| \mathbf{\gamma} \|_1 = M$, proving the lemma.
\end{proof}

\begin{theorem}
    Given any $M > 0$ value, a classifier $f$  with a \textit{E2E-FS} mask needs to be trained until $\mathcal{L}_{1-2} + \mathcal{L}_{M} = 0$ to ensure that $M$ features are selected and the binary mask is properly formed.
\end{theorem}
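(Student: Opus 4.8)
The plan is to present this statement as a direct corollary of Lemma \ref{lemma:2_3}, since the theorem merely re-expresses that biconditional in operational terms. First I would fix the formal meaning of the two desiderata: ``the binary mask is properly formed'' is precisely the condition $\tilde{\mathbf{\gamma}} \in \{0,1\}^F$, and ``$M$ features are selected'' is $nnz(\tilde{\mathbf{\gamma}}) = M$. With these translations in hand, the content of the theorem is that zero regularization loss is both \emph{sufficient} to guarantee the desired configuration and, via its contrapositive, \emph{necessary} before training may be stopped.

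For sufficiency I would invoke the ``$\Rightarrow$'' direction of Lemma \ref{lemma:2_3}: if $\mathcal{L}_{1-2} + \mathcal{L}_{M} = 0$ then $\tilde{\mathbf{\gamma}} \in \{0,1\}^F$ and $\|\tilde{\mathbf{\gamma}}\|_1 = M$. The first conjunct is already the ``properly formed'' requirement; for the second I would note that once every coordinate lies in $\{0,1\}$, each nonzero entry contributes exactly $1$ to the $\ell_1$ norm, so $nnz(\tilde{\mathbf{\gamma}}) = \|\tilde{\mathbf{\gamma}}\|_1 = M$, giving exactly $M$ selected features. For necessity I would use the contrapositive: whenever $\mathcal{L}_{1-2} + \mathcal{L}_{M} > 0$, Lemma \ref{lemma:2_3} forces at least one condition to fail, so either some coordinate still lies strictly inside $(0,1)$ (an unresolved, non-binary mask) or $\|\tilde{\mathbf{\gamma}}\|_1 \neq M$ (the wrong number of features). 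Hence no state with positive regularization loss can certify the target configuration, which is exactly why training must continue until the loss vanishes.

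The reasoning itself is essentially a repackaging of the earlier lemma, so the main subtlety to address is the gap between the exact-zero condition in the statement and the behavior of the soft gradient-descent dynamics: as the surrounding implementation notes point out, satisfaction of the constraints cannot be guaranteed after any fixed number of epochs, so the exact-zero loss serves as the stopping criterion to be monitored rather than a quantity reached automatically. I would therefore close by emphasizing that $\mathcal{L}_{\tilde{\mathbf{\gamma}}} = 0$ is the precise, checkable certificate that the E2E-FS mask has converged to a valid selection of $M$ features.
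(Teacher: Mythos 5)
Your proposal is correct and takes essentially the same route as the paper, whose entire proof is the single line ``By Lemma \ref{lemma:2_3}''; you simply spell out the translation of the informal wording into the two formal conditions and unpack both directions of that lemma's biconditional. No gap.
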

\begin{proof}
    By Lemma \ref{lemma:2_3}.
\end{proof}

\section{Experimental Results}
\label{sec:results}

\begin{table*}[t]
	\centering
	\caption{Feature selection approaches considered in the
experiments, specified according their time and memory complexity. N is the number of samples, F is the number of initial features, K is a multiplicative constant, i is the number of iterations in the case of iterative algorithms, C is the number of classes and $O_f$ is the classifier complexity (in wrappers).}
	\label{tab:methods}
	\resizebox{.6\linewidth}{!}{
    	\begin{tabular}{| c | c | c |} 
          	\hline
          	  \textbf{Method} & \textbf{Time complexity} & \textbf{Memory Complexity} \\
          	\hline
            MIM \cite{ross2014mutual} & $\mathcal{O}(N^2F^2)$ & $\mathcal{O}(F^2)$ \\
            \hline
            Fisher \cite{he2006laplacian} & $\approx \mathcal{O}(CNF)$ & $\mathcal{O}(F^2)$ \\
            \hline
            ReliefF \cite{kononenko1997overcoming} & $\mathcal{O}(iFNC)$ & $\mathcal{O}(F)$\\
            \hline
            InfFS \cite{roffo2015infinite} & $\mathcal{O}(N^{2.37}(1 + F))$ & $\mathcal{O}(F^2)$ \\
          	\hline
            ILFS \cite{roffo2017infinite} & $\mathcal{O}(N^{2.37} + iN + F + C)$ & $\mathcal{O}(F^2)$ \\
          	\hline
            DFS \cite{li2016deep} & $\mathcal{O}(O_f)$ & $\mathcal{O}(O_f + F)$\\
          	\hline
            SFS \cite{cancela2020scalable} & $\mathcal{O}(iO_f)$ & $\mathcal{O}(O_f + F)$\\
          	\hline \hline
            E2E-FS & $\mathcal{O}(O_f)$ & $\mathcal{O}(O_f + F)$\\
          	\hline 
        \end{tabular}
    }
\end{table*}

In order to test our algorithm\footnote{All our algorithms and scripts will be accessible via GitHub.}, we carried out a series of experiments over three different scenarios, varying the number of samples and features of the datasets: microarrays, datasets artificially modified for feature selection challenges \cite{guyon2005result, guyon2007agnostic}, and image datasets. Table \ref{tab:datasets} shows all datasets used. Our algorithm was implemented by using the Keras framework \cite{chollet2015keras}. All methods used to test against our methodology are implemented or accessible via Python scripts. Table \ref{tab:methods} summarizes all the methods used in this experimental section, along with both their time and memory complexity.
  
\begin{table*}
	\centering
	\caption{Datasets used in the experiments. The first block are microarray datasets (low number of samples and high number of features); the second are datasets specifically created to evaluate feature selection algorithms; the latter are image datasets, with a huge number of samples. An X in the last column means the dataset is balanced.}
	\label{tab:datasets}
	\resizebox{.8\linewidth}{!}{
    	\begin{tabular}{| c | c | c | c | c | c |} 
          	\hline
          	  \textbf{Dataset} & \textbf{\# samples} & \textbf{\# features} & \textbf{\# classes} & \textbf{unbalance (+/-)} \\
          	\hline
            LYMPHOMA \cite{golub1999molecular} & 45 & 4026 & 2 & 23/22\\
            COLON \cite{alon1999broad} & 62 & 2000 & 2 & 40/22\\
            LEUKEMIA \cite{golub1999molecular} & 72 & 7129 & 2 & 47/25\\
            LUNG \cite{gordon2002translation} & 181 & 12533 & 2 & 31/150\\
          	\hline\hline
            DEXTER \cite{guyon2007competitive} & 600 & 20000 & 2 & X\\
            MADELON \cite{guyon2007competitive} & 2600 & 500 & 2 & X\\
            GINA \cite{guyon2007agnostic} & 3153 & 970 & 2 & 1600/1500\\
            GISETTE \cite{guyon2007competitive} & 7000 & 5000 & 2 & X\\
          	\hline \hline
            MNIST \cite{lecun1998gradient}& 70K & 784 & 10 & X\\
            Fashion-MNIST \cite{xiao2017/online} & 70K & 784 & 10 & X\\
            CIFAR-10 \cite{krizhevsky2009learning} & 60K & 3072 & 10 & X\\
            CIFAR-100 \cite{krizhevsky2009learning} & 60K & 3072 & 100 & X\\
          	\hline 
        \end{tabular}
    }
\end{table*}

\vspace{0.3cm}\noindent\textbf{Microarrays}:\quad In order to train our algorithm we have created a naive SVM in Keras. It consists on a neural network with no hidden units and square-hinge as loss function, using weight balance. A $l2$-norm regularization is applied to the model's weights. It is set to $100/N$, being $N$ the number of samples in the training set. We have trained the model for $150$ epochs, using the Adam optimizer with a learning rate of $1e-3$, dividing its value by $5$ after every $50$ epochs. The batch size is set to $\max(2, N/50)$.

To guarantee the convergence of our methods, we added $300$ epochs to the beginning of the training (the learning rate will remain fixed to its initial value) for our E2E-FS algorithm, and $200$ for the E2E-FS-Soft (the moving parameter $M_{0.75}$ causes a faster convergence), setting the hyper-parameter $T = 300$ and $T = 250$ for E2E-FS and E2E-FS-Soft, respectively. We have a weight warm-up for $5$ epochs (only $\mathcal{L}_f$ is taken into account). All the other parameters will remain as default. As data normalization we have used the function
\begin{equation}
    \tilde{X} = erf\left(\frac{X - \mu_t}{2\sigma_t}\right),
\end{equation}
being $\mu_t$ and $\sigma_t$ the training set feature-wise mean and sample standard deviation. To test our algorithm, we followed the same procedure reported in \cite{roffo2015infinite}: we use a stratified 3-fold cross validation over the complete dataset, testing the algorithms against both our network previously described and a Linear SVM, which $C$ parameter is chosen by performing a grid search over the training dataset (a 5-fold partition is used for this matter). As some datasets have unbalanced data, we have used the area under the curve of the balance accuracy (AuC-BA) as our quality measure, averaging the performance obtained with the first 10, 50, 100, 150, and 200 selected features. As our method cannot be directly used over a Linear SVM model trained with the SMO algorithm, we also report the result of training the SVM with the features selected by our algorithm when trained with our naive network.

We decided to test our algorithms against five different feature selection approaches: Fisher \cite{he2006laplacian}, MIM \cite{ross2014mutual}, ReliefF \cite{kononenko1997overcoming}, InfFS \cite{roffo2015infinite}
and ILFS \cite{roffo2017infinite}. In order to have a fair competition between all FS methods, we have used a default configuration for all of them. To do so, we set InfFS and ILFS parameters without any cross-validation ($\alpha = 0.5$ for the InfFS and $\mathcal{T} = 6$ for the ILFS). Table \ref{tab:results} shows the obtained results. Our algorithms can achieve at least state-of-the-art results in all datasets. The results also show a tendency that will be confirmed later: the differences between our proposal and the other FS techniques increase with the number of samples. It is also to be noted the bad results obtained for both InfFS and ILFS. The first one was expected, as InfFS is the only unsupervised method we are testing. On the contrary, the ILFS method achieves very good results with 100 or more features, but its performance significantly drops below than point. This is caused because the algorithm assigns the highest score to a huge number of features (between 20 to 80, depending on the dataset), and subsequently it cannot distinguish between all of them.

\begin{table*}
    \caption{Microarray and FS Challenge AUC-BA results, averaging the performance obtained with the first 10, 50, 100, 150, and 200 features. The same dataset splits were performed in every FS method (stratified 3-fold, 20 splits). NaiveF means our Naive network started with all features. Naive is the same network but with the features previously selected (In the case of our methods, they are selected by the NaiveF model). In bold face, best methods when using a pairwise Wilcoxon test with $\alpha = 0.05$. The datasets are ordered by the number of samples.}
    \label{tab:results}
    \resizebox{.99\linewidth}{!}{
        \begin{tabular}{@{}l|c||c|c|c|c||c|c|c|@{}}
            \cmidrule(l){3-9} 
            \multicolumn{2}{c|}{} & \textbf{LYMPHOMA} & \textbf{COLON} & \textbf{LEUKEMIA} & \textbf{LUNG} & \textbf{DEXTER} & \textbf{GINA} & \textbf{GISETTE} \\
            \midrule
            \multirow{2}{*}{\textbf{MIM}} & \textbf{Naive} & \textbf{0.944 $\pm$ 0.06} & 0.815 $\pm$ 0.08 & \textbf{0.970 $\pm$ 0.03} & \textbf{0.982 $\pm$ 0.02} & 0.862 $\pm$ 0.05 & 0.839 $\pm$ 0.03 & 0.919 $\pm$ 0.03 \\
            & \textbf{SVM} & 0.943 $\pm$ 0.06 & 0.807 $\pm$ 0.07 & 0.957 $\pm$ 0.03 & 0.983 $\pm$ 0.02 & 0.867 $\pm$ 0.04 & 0.834 $\pm$ 0.03 & 0.921 $\pm$ 0.03 \\
            \midrule
            \multirow{2}{*}{\textbf{FISHER}} & \textbf{Naive} & \textbf{0.948 $\pm$ 0.06} & 0.833 $\pm$ 0.06 & \textbf{0.969 $\pm$ 0.04} & 0.979 $\pm$ 0.03 & 0.904 $\pm$ 0.05 & 0.838 $\pm$ 0.03 & 0.918 $\pm$ 0.03\\
            & \textbf{SVM} & \textbf{0.950 $\pm$ 0.06} & 0.828 $\pm$ 0.07 & 0.959 $\pm$ 0.04 & \textbf{0.983 $\pm$ 0.02} & 0.909 $\pm$ 0.05 & 0.831 $\pm$ 0.03 & 0.919 $\pm$ 0.03\\
            \midrule
            \multirow{2}{*}{\textbf{RELIEFF}} & \textbf{Naive} & 0.942 $\pm$ 0.05 & 0.824 $\pm$ 0.08 & 0.961 $\pm$ 0.04 & 0.978 $\pm$ 0.03 & 0.908 $\pm$ 0.04 & 0.826 $\pm$ 0.04 & 0.916 $\pm$ 0.04\\
            & \textbf{SVM} & 0.944 $\pm$ 0.06 & 0.815 $\pm$ 0.07 & 0.957 $\pm$ 0.04 & 0.982 $\pm$ 0.02 & 0.910 $\pm$ 0.03 & 0.817 $\pm$ 0.03 & 0.921 $\pm$ 0.04\\
            \midrule
            \multirow{2}{*}{\textbf{InfFS}} & \textbf{Naive} & 0.852 $\pm$ 0.12 & 0.803 $\pm$ 0.09 & 0.943 $\pm$ 0.06 & 0.950 $\pm$ 0.09 & 0.812 $\pm$ 0.09 & 0.722 $\pm$ 0.06 & 0.844 $\pm$ 0.11\\
            & \textbf{SVM} & 0.837 $\pm$ 0.12 & 0.791 $\pm$ 0.08 & 0.925 $\pm$ 0.05 & 0.892 $\pm$ 0.02 & 0.810 $\pm$ 0.09 & 0.723 $\pm$ 0.06 & 0.843 $\pm$ 0.11\\
            \midrule
            \multirow{2}{*}{\textbf{ILFS}} & \textbf{Naive} & 0.868 $\pm$ 0.16 & 0.831 $\pm$ 0.07 & 0.952 $\pm$ 0.04 & 0.931 $\pm$ 0.12 & 0.805 $\pm$ 0.16 & 0.749 $\pm$ 0.13 & 0.888 $\pm$ 0.08\\
            & \textbf{SVM} & 0.866 $\pm$ 0.16 & 0.812 $\pm$ 0.07 & 0.941 $\pm$ 0.05 & 0.953 $\pm$ 0.10 & 0.801 $\pm$ 0.16 & 0.748 $\pm$ 0.13 & 0.886 $\pm$ 0.08\\
            \midrule
            \midrule
            \multirow{3}{*}{\textbf{E2E-FS}} & \textbf{NaiveF} & 0.929 $\pm$ 0.06 & \textbf{0.848 $\pm$ 0.06} & \textbf{0.971 $\pm$ 0.03} & 0.981 $\pm$ 0.03 & \textbf{0.923 $\pm$ 0.02} & \textbf{0.856 $\pm$ 0.02} & \textbf{0.961 $\pm$ 0.02} \\
            & \textbf{Naive} & 0.933 $\pm$ 0.06 & \textbf{0.847 $\pm$ 0.06} & 0.960 $\pm$ 0.04 & \textbf{0.985 $\pm$ 0.02} & \textbf{0.924 $\pm$ 0.02} & \textbf{0.854 $\pm$ 0.02} & \textbf{0.962 $\pm$ 0.02}\\
            & \textbf{SVM} & 0.929 $\pm$ 0.06 & 0.825 $\pm$ 0.07 & \textbf{0.972 $\pm$ 0.02} & 0.981 $\pm$ 0.03 & \textbf{0.923 $\pm$ 0.02} & \textbf{0.855 $\pm$ 0.02} & \textbf{0.961 $\pm$ 0.02}\\
            \midrule
            \multirow{3}{*}{\textbf{E2E-FS-Soft}} & \textbf{NaiveF} & 0.923 $\pm$ 0.07 & \textbf{0.846 $\pm$ 0.07} & \textbf{0.970 $\pm$ 0.03} & 0.981 $\pm$ 0.03 & 0.920 $\pm$ 0.02 & \textbf{0.855 $\pm$ 0.02} & \textbf{0.961 $\pm$ 0.03} \\
            & \textbf{Naive} & 0.935 $\pm$ 0.06 & \textbf{0.847 $\pm$ 0.06} & 0.959 $\pm$ 0.03 & \textbf{0.986 $\pm$ 0.02} & \textbf{0.922 $\pm$ 0.02} & \textbf{0.855 $\pm$ 0.02} & \textbf{0.963 $\pm$ 0.02} \\
            & \textbf{SVM} & 0.932 $\pm$ 0.06 & 0.820 $\pm$ 0.07 & \textbf{0.971 $\pm$ 0.03} & 0.981 $\pm$ 0.03 & 0.920 $\pm$ 0.02 & \textbf{0.855 $\pm$ 0.02} & \textbf{0.961 $\pm$ 0.02} \\
            \bottomrule
        \end{tabular}
    }
\end{table*}

\vspace{0.3cm}\noindent\textbf{FS Challenge datasets}:\quad Secondly, we tested our algorithm against challenging datasets that contain distractor features. Last three rows of Table \ref{tab:results} shows the obtained results when using the same configurations of the microarray datasets. As the number of samples is higher, our proposed models achieved the best results in all datasets, as observed above. Note that we did not introduce the MADELON dataset in this table, as this is a dataset that cannot be correctly classified by using a linear classifier. Furthermore, it is known that only 5 features are relevant, containing another 15 features that are linear combinations of the relevant features. Thus, it is useless to evaluate its performance when selecting more features. 

As the number of samples in these datasets are higher, we also performed a test using a dense network with $3$ hidden layers ($50$, $25$ and $10$ units, respectively). Batch Normalization \cite{ioffe2015batch} and the ReLU function were used in each layer. During the feature selection extra epochs, we increase the learning rate to $0.05$. All the other configurations remained as in the previous tests. This configuration was specifically chosen because it clearly overfits the training. We want to check how our algorithm behaves whenever the classifier is not ideal. Figure \ref{fig:nn_results} shows the results on the three datasets, when varying the number of features selected. Again, our algorithms (specially the E2E-FS-Soft) achieved the best results. It is specially remarkable the differences obtained when the number of features is extremely low. In the case of the MADELON dataset, it causes an interesting behavior. IAs mentioned above, this dataset contains only 5 relevant features along with 15 linear combinations of them. Our algorithms obtain their best results using only 5 features, suggesting that, whenever the classifier is not carefully chosen, the algorithm is prone to remove redundant information first, rather than noise features.

\begin{figure*}[t]
	\centering
	\begin{minipage}[]{0.32\linewidth}
	    \centering
 		\includegraphics[width=0.99\textwidth]{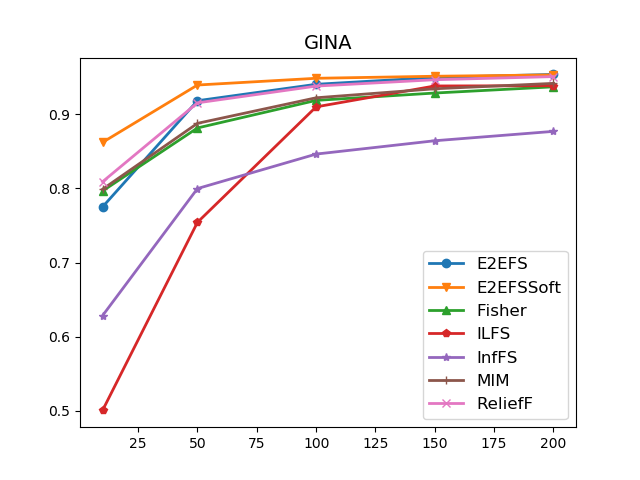}\\
	\end{minipage}
	\begin{minipage}[]{0.32\linewidth}
	    \centering
 		\includegraphics[width=0.99\textwidth]{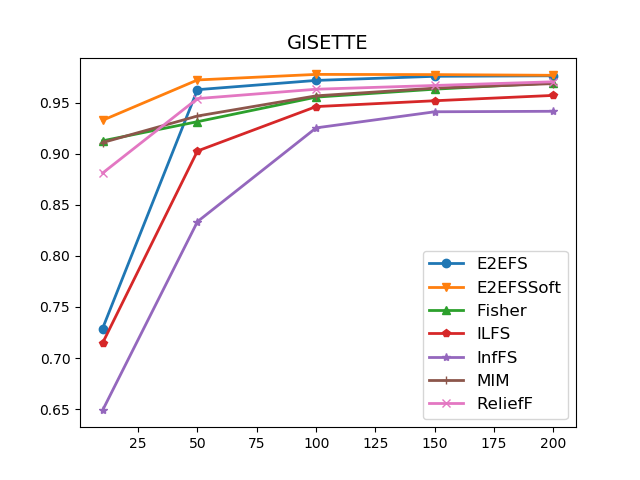}\\
	\end{minipage}
	\begin{minipage}[]{0.32\linewidth}
	    \centering
 		\includegraphics[width=0.99\textwidth]{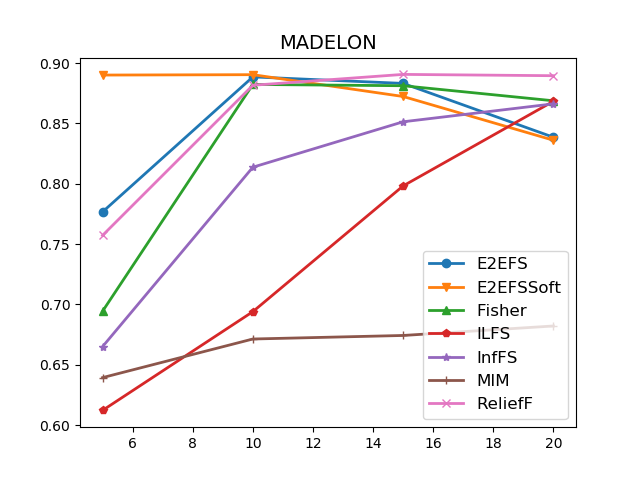}\\
	\end{minipage}
\caption{
 Balance Accuracy (BA) results (vertical axis) when using a three layer neural network. Our algorithms can maintain a high BA even when the number of features (horizontal axis) is low. 
}
\label{fig:nn_results}
\end{figure*}

\begin{table}
	\centering
	\caption{MNIST accuracy results when selecting a different amount of features (second row), using WRN-16-4 as the classifier. State-of-the-art results were obtained in  \cite{cancela2020scalable}. The T column refers to the time needed to perform the whole computation for the minimum number of features presented in the table.}
	\label{tab:mnist}
	    \begin{tabular}{| c | c | c | c | c | c |} 
          	\toprule
          	  \multirow{2}{*}{} & \multicolumn{5}{|c|}{\textbf{MNIST}}\\
          	\cmidrule(l){2-6}
          	  & 39 & 78 & 196 & 392 & T(39) \\
          	\midrule
          	DFS & 95.73 & 98.56 & 99.32 & 99.46 & $3960s$\\
          	\midrule
          	SFS & 89.78 & 95.66 & 99.14 & \textbf{99.53} & $5940s$\\
          	\midrule
          	iSFS & \textbf{97.08} & \textbf{98.62} & 99.13 & \textbf{99.56} & $\approx 32h$\\
          	\midrule
          	SFS$+$DFS & 95.60 & 98.47 & \textbf{99.38} & 99.48& $5940s$ \\
          	\midrule
          	\midrule
          	\multirow{2}{*}{E2E-FS} & 91.87 & 97.54 & 99.28 & 99.49 & \multirow{2}{*}{$\mathbf{3420s}$} \\
          	& $\pm$ 3.81 & $\pm$ 0.9 & $\pm$ 0.08 & $\pm$ 0.04 & \\
          	\midrule
          	\multirow{2}{*}{E2E-FS-Soft} & 95.44 & 97.89 & 99.15 & 99.30 & \multirow{2}{*}{$\mathbf{3420s}$} \\
          	& $\pm$ 0.53 & $\pm$ 0.13 & $\pm$ 0.09 & $\pm$ 0.04 & \\
          	\bottomrule
        \end{tabular}
\end{table}

\vspace{0.2cm}\noindent\textbf{Image datasets}:\quad We also test how our algorithms deal with more complex classifiers like CNNs, by performing the same experiment presented in \cite{cancela2020scalable}: four datasets (MNIST, Fashion-MNIST, CIFAR-10 and CIFAR-100) and one network: the Wide Residual Network \cite{zagoruyko2016wide} (WRN-16-4), which was tested against four diferent approaches: the Deep Feature Selection (DFS) method \cite{li2016deep}, which is a variant of the LASSO algorithm, specially designed to be used in Deep Learning architectures; the SFS algorithm and its iterative version we called it iSFS; and a combination of both DFS and SFS. In this case, we followed the predefined dataset partitions. As our algorithms' hyper-parameters, we set $T=100$ and $T=250$ for E2E-FS and E2E-FS-Soft, respectively. First, we started the model weights with the result of training the model for $110$ epochs with all features. To do that, we used the SGD optimizer, starting with a learning rate of $0.1$, dividing its value by $5$ every $30$ epochs; we also used a $l2$-norm regularization over the weights, set at $5e-4$. After that, we started training our model for $70$ epochs to ensure the $M$ desired features are selected (learning rate fixed at $0.1$). Finally, we trained the model with the same specifications provided before. As we are dealing with CNNs, $\mathbf{\gamma}$ needs to be reshaped to the image input size in $\mathcal{L}_f$. 

Tables \ref{tab:mnist}, \ref{tab:fashionmnist}, \ref{tab:cifar10} and \ref{tab:cifar100} show the obtained results. The E2E-FS-Soft algorithm achieved the best results in all datasets but MNIST, in which the obtained results are lower than those that can be obtained by the DFS algorithm. We believe that our algorithm may not behave as well as expected against datasets with binary variables. However, as it has been said, the accuracy remains close to the state-of-the-art. On the contrary, the accuracy significantly rises in the other datasets. Again, a huge improvement is obtained when the number of selected features is low (with 153 variables, performance increases on CIFAR 10 and 100 by 12\% and 29\%, respectively).

\begin{table}
	\centering
	\caption{Fashion-MNIST accuracy results when selecting a different amount of features (second row), using WRN-16-4 as the classifier. State-of-the-art results were obtained in  \cite{cancela2020scalable}. The T column refers to the time needed to perform the whole computation for the minimum number of features presented in the table.}
	\label{tab:fashionmnist}
    	\begin{tabular}{| c | c | c | c | c | c |} 
          	\toprule
          	\multirow{2}{*}{} &
          	 \multicolumn{5}{|c|}{\textbf{Fashion-MNIST}}\\
          	\cmidrule(l){2-6} 
          	& 39 & 78 & 196 & 392 & T(39) \\
          	\midrule
          	DFS & 
          	78.85 & 85.50 & 90.45 & 92.61 & $4400s$ \\
          	\midrule
          	SFS & 
          	67.85 & 81.86 & 89.33 & 92.36 & $6600s$ \\
          	\midrule
          	iSFS & 
          	82.63 & 86.33 & 90.09 & 92.60 & $\approx 35h$ \\
          	\midrule
          	SFS$+$DFS & 
          	79.81 & 86.29 & 90.44 & 92.59 & $6600s$ \\
          	\midrule
          	\midrule
          	\multirow{2}{*}{E2E-FS} & 
          	78.11 & 83.06 & 91.45 & \textbf{93.88} & \multirow{2}{*}{$\mathbf{3800s}$}\\
          	& 
          	$\pm$ 3.34 & $\pm$ 1.30 & $\pm$ 0.50 & \textbf{$\pm$ 0.09} &  \\
          	\midrule
          	\multirow{2}{*}{E2E-FS-Soft} & 
          	\textbf{86.16} & \textbf{89.93} & \textbf{92.70} & 93.68 & \multirow{2}{*}{$\mathbf{3800s}$}\\
          	& 
          	\textbf{$\pm$ 0.44} & \textbf{$\pm$ 0.14} & \textbf{$\pm$ 0.18} & $\pm$ 0.09 &  \\
          	\bottomrule
        \end{tabular}
\end{table}

\begin{table}
	\centering
	\caption{CIFAR-10 accuracy results when selecting a different amount of features (second row), using WRN-16-4 as the classifier. State-of-the-art results were obtained in  \cite{cancela2020scalable}. The T column refers to the time needed to perform the whole computation for the minimum number of features presented in the table.}
	\label{tab:cifar10}
    	\begin{tabular}{| c | c | c | c | c | c |} 
          	\toprule
          	\multirow{2}{*}{} &
          	\multicolumn{5}{|c|}{\textbf{CIFAR-10}}\\
          	\cmidrule(l){2-6} 
          	& 153 & 307 & 768 & 1536 & T(153) \\
          	\midrule
          	DFS & 
          	67.43 & 79.92 & 87.71 & 90.69 & $4840s$ \\
          	\midrule
          	SFS & 
          	61.00 & 72.49 & 85.55 & 90.44 & $7260s$ \\
          	\midrule
          	iSFS & 
          	64.15 & 79.27 & 89.85 & 91.58 & $\approx 39h$ \\
          	\midrule
          	SFS$+$DFS & 
          	68.13 & 79.03 & 88.04 & 91.06 & $7260s$ \\
          	\midrule
          	\midrule
          	\multirow{2}{*}{E2E-FS} & 
          	71.80 & 82.76 & 90.37 & \textbf{93.01} & \multirow{2}{*}{$\mathbf{4180s}$} \\
          	& $\pm$ 0.32 & $\pm$ 0.62 & $\pm$ 0.51 & \textbf{$\pm$ 0.18} & \\
          	\midrule
          	\multirow{2}{*}{E2E-FS-Soft} & 
          	\textbf{76.50} & \textbf{84.43} & \textbf{91.32} & 92.72 & \multirow{2}{*}{$\mathbf{4180s}$} \\
          	& \textbf{$\pm$ 0.87} & \textbf{$\pm$ 0.26} & \textbf{$\pm$ 0.15} & $\pm$ 0.17 & \\
          	\bottomrule
        \end{tabular}
\end{table}
\begin{table}
	\centering
	\caption{CIFAR-100 accuracy results when selecting a different amount of features (second row), using WRN-16-4 as the classifier. State-of-the-art results were obtained in  \cite{cancela2020scalable}. The T column refers to the time needed to perform the whole computation for the minimum number of features presented in the table.}
	\label{tab:cifar100}
    	\begin{tabular}{| c | c | c | c | c | c |} 
          	\toprule
          	\multirow{2}{*}{} &
          	\multicolumn{5}{|c|}{\textbf{CIFAR-100}}\\
          	\cmidrule(l){2-6}  
          	& 153 & 307 & 768 & 1536 & T(153) \\
          	\midrule
          	DFS & 
          	34.55 & 49.68 & 57.92 & 67.42 & $4840s$ \\
          	\midrule
          	SFS & 
          	24.66 & 37.86 & 56.66 & 66.39 & $7260s$ \\
          	\midrule
          	iSFS & 
          	30.74 & 44.55 & 62.83 & 67.22 & $\approx 39h$ \\
          	\midrule
          	SFS$+$DFS & 
          	36.86 & 46.64 & 60.46 & 64.94 & $7260$ \\
          	\midrule
          	\midrule
          	\multirow{2}{*}{E2E-FS} & 
          	40.48 & 53.32 & 63.08 & 69.78 & \multirow{2}{*}{$\mathbf{4180s}$} \\
          	& $\pm$ 1.00 & $\pm$ 1.15 & $\pm$ 0.53 & $\pm$ 0.15 & \\
          	\midrule
          	\multirow{2}{*}{E2E-FS-Soft} & 
          	\textbf{48.65} & \textbf{56.98} & \textbf{67.03} & \textbf{70.30} & \multirow{2}{*}{$\mathbf{4180s}$} \\
          	& \textbf{$\pm$ 0.45} & \textbf{$\pm$ 0.25} & \textbf{$\pm$ 0.18} & \textbf{$\pm$ 0.13} & \\
          	\bottomrule
        \end{tabular}
\end{table}

\section{Conclusion}
\label{sec:conclusion}

In this paper we have presented a novel feature selection method that can be attached to any model that is trained by using gradient descent techniques. We provided the idea of our algorithm as a general optimization problem, and an approximation that can be used to efficiently solve it. The experimental results show how our proposal, using only default parameters, can achieve state-of-the-art results or better in a wide variety of datasets (from microarray to larger image datasets), using different classifiers (SVMs, dense NNs and CNNs) and optimizers (Adam and SGD).

As future work, we believe that this idea can open several paths: 1) Develop different approaches that can solve the E2E-FS optimization problem; 2) Evaluate the possibility of using E2E-FS as a classic regularization technique, rather than force it to select a specific number of features. This approach will involve the combination of our optimization techniques with algorithms for selecting the optimal number of features \cite{somol2004fast, nguyen2010optimal}; and 3) Perform an extensive analysis to check if the algorithm have some limitations when it is used against binary data. Although the results are good, they seem to somehow limit the advantages of our algorithm against other FS techniques for all scenarios.



{\small
\printbibliography
}

\end{document}